\documentclass[12pt]{article}
\usepackage[margin=1in]{geometry}
\usepackage{microtype}
\usepackage{graphicx}
\usepackage{booktabs}
\usepackage{amsmath,amssymb,amsthm,mathtools,bm}
\usepackage{enumitem}
\usepackage[dvipsnames,table]{xcolor}
\usepackage{tikz}
\usetikzlibrary{calc,positioning,backgrounds,shapes.geometric,arrows.meta}
\usepackage{hyperref}
\usepackage{url}
\usepackage{natbib}

\newtheorem{theorem}{Theorem}
\newtheorem{lemma}{Lemma}

\newtheorem{corollary}{Corollary}
\theoremstyle{definition}
\newtheorem{definition}{Definition}

\theoremstyle{remark}
\newtheorem{remark}{Remark}

\newcommand{\E}{\mathbb{E}}

\newcommand{\TV}{\mathrm{D_{TV}}}

\title{Regret-Aware Policy Optimization: Environment-Level Memory\\
for Replay Suppression under Delayed Harm}
\author{Prakul Sunil Hiremath \\
\small Department of Computer Science and Engineering, VTU, Belagavi, India \\
\small Aliens on Earth (AoE) Autonomous Research Group, Belagavi, India \\
\small \href{mailto:prakulhiremath@vtu.ac.in}{\texttt{prakulhiremath@vtu.ac.in}}
}
\date{}

\begin{document}
\maketitle

\begin{abstract}
Safety in reinforcement learning (RL) is often enforced by objective shaping (e.g., Lagrangian penalties) while keeping the environment response to observable state--action pairs stationary. With delayed harms, this can create \textbf{replay}: after a washout period, reintroducing the same stimulus under a matched observable configuration reproduces a similar harmful cascade because the observable transition law is unchanged. We introduce the \textbf{Replay Suppression Diagnostic (RSD)}, a controlled exposure--washout--replay protocol that fixes stimulus identity and resets observable state and agent memory, while freezing the policy; only environment-side memory is allowed to persist. We prove a no-go result: under stationary observable kernels, replay-phase re-amplification cannot be structurally reduced without a persistent shift in replay-time action distributions. Motivated by platform-mediated systems, we propose \textbf{Regret-Aware Policy Optimization (RAPO)}, which adds persistent harm-trace and scar fields and uses them to apply a bounded, mass-preserving transition reweighting that reduces reachability of historically harmful regions. On graph diffusion tasks (50--1000 nodes), RAPO consistently suppresses replay; on 250-node graphs it reduces RAG from 0.98 (PM-ST control) to 0.33 while retaining 82\% task return. A counterfactual that disables deformation only during replay restores re-amplification (RAG: 0.91), isolating transition deformation as the causal mechanism.
\end{abstract}

\section{Introduction}

Content recommendation systems face a challenging safety failure mode: harmful content (misinformation, extremism) generates short-term engagement but causes delayed negative outcomes—user churn, reputation damage, regulatory penalties. Standard safe RL applies transient penalties when harm signals arrive, temporarily suppressing similar content. However, once penalties decay and similar contexts recur, systems with stationary observable transitions can reproduce the same harmful cascade under the same policy. We call this \textbf{replay}.

Replay arises from a structural mismatch: transient penalties modify objectives while environment responses to observable inputs remain history-invariant. This differs from exploration failures or distribution shift—it occurs under \emph{controlled conditions} (same stimulus, same observable state) after penalty decay.

\paragraph{Why existing approaches are insufficient.}
Constrained RL (CPO~\citep{achiam2017constrained}, Lagrangian methods) shapes objectives but leaves observable transitions stationary. Policy-side memory (recurrent policies, history features) changes action selection but doesn't alter environment responses to given $(x,a)$ pairs. Shields~\citep{alshiekh2018safe} prevent replay via persistent global avoidance, which can be overly conservative.

\paragraph{Our approach: Platform-mediated transition deformation.}
We target \textbf{platform-mediated systems}—recommendation engines, network routers, warehouse controllers, digital twins—where a centralized component controls routing or exposure. In these settings, we can deform transition kernels based on accumulated harm history while agents act on the same observable inputs. This is not ``changing physics''; it's implementing safety layers that gate transitions into regions with delayed-harm histories.

\paragraph{Contributions.}
We introduce the \textbf{Replay Suppression Diagnostic (RSD)}, an exposure-decay-replay protocol that isolates re-amplification under observable-matched conditions and frozen-policy evaluation (Section~\ref{sec:rsd}). We prove stationary observable kernels cannot structurally suppress replay without changing action distributions (Theorem\ref{thm:no_go_replay}), motivating environment-level intervention. We propose \textbf{RAPO}, which uses persistent harm-trace ($G$, decaying) and scar ($H$, irreversible) fields to deform transitions (Section~\ref{sec:method}), with theoretical guarantees on odds contraction (Lemma~\ref{lem:odds_contraction}) and utility preservation (Lemma~\ref{lem:utility_bound}). On graph diffusion (50-1000 nodes), RAPO reduces replay 67--83\% (RAG: 0.33 vs 1.02 for stationary baselines) while retaining 78--91\% utility. A policy-memory control (PM-ST) with matched observations shows no suppression (RAG: 0.98), and disabling deformation only during replay restores re-amplification (RAG: 0.91), providing causal evidence (Section~\ref{sec:experiments}).

\section{Problem Setup: Replay under Delayed Harm}
\label{sec:problem}

\subsection{Observable Dynamics with Delayed Harm}

The nominal environment follows stationary observable dynamics $x_{t+1} \sim P_0(\cdot \mid x_t, a_t)$ with reward $r(x_t,a_t)$, where $x_t \in \mathcal{X}$ is the observable state and $a_t \in \mathcal{A}$ is the action.
We allow the environment to maintain latent memory $\xi_t$ (e.g., logs, throttling state, or safety filters) that is not included in $x_t$.

\paragraph{Stationary-observable baseline vs.\ environment memory.}
In the \emph{stationary-observable baseline}, $\xi_t$ may evolve but does not affect the conditional law of the next observable state:
for any fixed stimulus identity $z$ and all $\xi_t$,
\begin{equation}
P(x_{t+1}\in \cdot \mid x_t,\xi_t,a_t;z)=P_0(\cdot\mid x_t,a_t).
\label{eq:stationary_observable_problemsetup}
\end{equation}
RAPO \emph{violates}~\eqref{eq:stationary_observable_problemsetup} by making the observable kernel depend on persistent environment-side fields (e.g., $G_t,H_t$), while keeping the agent's observation interface $x_t$ unchanged.

Formally, for a fixed stimulus identity $z \in \mathcal{Z}$ during Exposure/Replay, the environment evolves as
\[
(x_{t+1}, \xi_{t+1}) \sim \tilde P(\cdot \mid x_t, \xi_t, a_t; z),
\]
while the agent observes only $x_t$.

Harm signals $\tilde{c}_t \geq 0$ arrive with delay $D$: $\tilde{c}_t = g(x_{t-D:t}, a_{t-D:t})$ for $t \geq D$ and $\tilde{c}_t = 0$ otherwise. Policies may have arbitrary memory: $a_t \sim \pi(\cdot \mid x_t, m_t)$ with internal state $m_{t+1} = u(m_t, x_t, a_t, x_{t+1})$, covering recurrent networks and history features. \textit{Critically, the baseline assumption is that $P_0$ remains stationary in $(x,a)$ regardless of policy memory.}

Many safety mechanisms maintain a transient penalty variable $p_t \geq 0$ (dual variable, cost accumulator) that decays when $\tilde{c}_t = 0$: there exist $\beta \in (0,1)$ and $p_{\min} \geq 0$ such that $p_{t+1} - p_{\min} \leq \beta(p_t - p_{\min})$.

\subsection{Replay: Re-amplification under Matched Observables}
\label{sec:replay_def}

We study \emph{replay} as a property of the closed-loop system that can occur even when the policy is held fixed.
Intuitively, replay captures whether reintroducing the \emph{same} stimulus under a matched observable configuration
reproduces a similar propagation cascade.

\begin{definition}[Replay episode and replay suppression]
\label{def:replay}
Fix an evaluation policy $\pi$ and an environment (which may carry internal memory not revealed in $x$).
A \emph{replay episode} consists of two rollouts indexed by $\phi \in \{\mathrm{exp},\mathrm{rep}\}$ with a shared stimulus identity
$z \in \mathcal{Z}$ and a shared observable reset state $x^\star \in \mathcal{X}$:
\begin{enumerate}[leftmargin=*]
\item \textbf{Exposure rollout ($\phi=\mathrm{exp}$).} Initialize $x_0 = x^\star$ and agent memory $m_0=\mathbf{0}$, set stimulus $z$ active,
and roll out for $T$ steps.
\item \textbf{Replay rollout ($\phi=\mathrm{rep}$).} Reinitialize the observable state to the same $x^\star$ and reset agent memory to $m_0=\mathbf{0}$,
reactivate the same stimulus $z$, and roll out for $T$ steps.
\end{enumerate}
Across the two rollouts, the \emph{policy parameters are frozen} and the \emph{agent memory is reset}; any environment-side internal variables (e.g., logs, safety filters, throttling state) are \emph{not} reset unless explicitly stated.

Let $\mathsf{M}(\tau)$ be a nonnegative propagation functional of a trajectory $\tau$ (e.g., peak reach, sensitive mass, or area-under-reach).
Define $\mu_{\mathrm{exp}} := \E[\mathsf{M}(\tau_{\mathrm{exp}})]$ and $\mu_{\mathrm{rep}} := \E[\mathsf{M}(\tau_{\mathrm{rep}})]$ under the same $(\pi, z, x^\star)$.
We say replay is \emph{suppressed} if $\mu_{\mathrm{rep}} < \mu_{\mathrm{exp}}$ and \emph{full replay} if $\mu_{\mathrm{rep}} \approx \mu_{\mathrm{exp}}$.
\end{definition}

\paragraph{What RSD isolates.}
Because the policy is frozen and the agent memory is reset between Exposure and Replay, differences in propagation
cannot be attributed to learning, exploration, or internal policy state.
Under matched $(z, x^\star)$, any change in $\E[\mathsf{M}]$ must arise from environment-side mechanisms that persist across rollouts
(e.g., platform gating, throttling, routing constraints, or other transition-level interventions).

\paragraph{Motivating mechanism: transient penalties with delayed harm.}
In many safe RL pipelines, delayed harm signals are converted into transient penalties or dual variables that decay when harm is absent. This can temporarily reduce harmful exposure during training, but it does not, by itself, change how the environment responds to matched observable inputs at evaluation time. RSD therefore evaluates replay under a frozen policy to test whether suppression is structural (environment-mediated) rather than a byproduct of time-varying penalties.

\section{Replay Suppression Diagnostic (RSD)}
\label{sec:rsd}

RSD is a controlled evaluation protocol that isolates replay by fixing stimulus identity, resetting observable state, and measuring re-amplification under frozen-policy evaluation.

\subsection{Protocol}
Each RSD episode has three phases:
\textbf{(1) Exposure} ($t=0,\ldots,T_{\text{exp}}-1$): sample stimulus $z \sim \mathcal{D}_{\text{stim}}$ once and activate it; delayed harm may arrive.
\textbf{(2) Decay} ($t=T_{\text{exp}},\ldots,t_{\text{rep}}-1$ where $t_{\text{rep}} = T_{\text{exp}}+T_{\text{decay}}$): disable the stimulus and run the system for a fixed, pre-registered $T_{\text{decay}}$ steps to separate Exposure and Replay.
\textbf{(3) Replay} ($t=t_{\text{rep}},\ldots,t_{\text{rep}}+T_{\text{rep}}-1$): reset the observable state ($x_{t_{\text{rep}}} \leftarrow x^\star$) and reset agent memory, then reintroduce the \emph{same} stimulus $z$; environment-side memory persists.
In \textbf{policy-frozen RSD}, all learnable parameters are frozen across phases; only environment state evolves.

\subsection{Metrics and Baselines}

We report replay-phase metrics that are not peak-only.

\paragraph{Replay metrics.}
Let $\mathrm{Reach}(t)=|A_t|$ and $\mathrm{Sens}(t)=|A_t\cap V_{\mathrm{sens}}|$.

\textbf{Re-amplification Gain (RAG):}
\[
\mathrm{RAG}:=\frac{\max_{t\in \mathrm{rep}}\mathrm{Reach}(t)}{\max_{t\in \mathrm{exp}}\mathrm{Reach}(t)+\epsilon}.
\]

\textbf{Replay AUC ratio (AUC-R):}
\[
\mathrm{AUC\text{-}R}:=\frac{\sum_{t\in \mathrm{rep}}\mathrm{Reach}(t)}{\sum_{t\in \mathrm{exp}}\mathrm{Reach}(t)+\epsilon}.
\]

\textbf{Sensitive-mass ratio (SM-R):}
\[
\mathrm{SM\text{-}R}:=\frac{\sum_{t\in \mathrm{rep}}\mathrm{Sens}(t)}{\sum_{t\in \mathrm{exp}}\mathrm{Sens}(t)+\epsilon}.
\]

\textbf{Replay return (ReplayRet):}
normalized replay-phase return
\[
\mathrm{ReplayRet}:=\frac{\E\!\left[\sum_{t\in \mathrm{rep}}\gamma^{t-t_{\mathrm{rep}}}r_t\right]}
{\E\!\left[\sum_{t\in \mathrm{rep}}\gamma^{t-t_{\mathrm{rep}}}r_t\right]_{\mathrm{GE}}}.
\]

\paragraph{Mechanism / auxiliary metrics (reported in figures/appendix).}
We report \textbf{OddsRatio} (stepwise harmful-entry odds contraction) to validate Lemma~\ref{lem:odds_contraction} and, for graphs, a \textbf{containment radius} $R_c$; these are not required to interpret replay suppression and are deferred to Figure~\ref{fig:odds_ratio} and Appendix.

\paragraph{Baselines.}
\textbf{GE}: reward-only.
\textbf{SS}: stationary Lagrangian penalty shaping under $P_0$.
\textbf{DR}: delayed-cost variant under $P_0$.
\textbf{Shield-UM}: reachability-based action blocking under $P_0$ with a threshold tuned on held-out episodes to match RAPO's replay return (utility-matched); tuning and compute are in Appendix.
\textbf{PM-ST}: policy observes $(x,G,H)$ and uses RAPO costs but samples transitions from $P_0$ (no deformation).
\textbf{PM-RNN}: GRU policy trained under $P_0$ with the same delayed-cost signals as DR/SS; evaluated under policy-frozen RSD.
\textbf{RAPO}: environment-side transition deformation using $(G,H)$.
\textbf{RAPO (off@rep)}: deformation disabled only during Replay (sampling from $P_0$), holding the trained policy and fields fixed.

\paragraph{Stimulus and observable matching in platform systems.}
In recommenders, $z$ can denote a fixed content item or cluster (fixed metadata), and $x^\star$ a standardized serving-time context snapshot (coarse profile/session features). RSD matches only the observable features used at serving time, not unobserved user/platform latents.

\section{Method: Regret-Aware Policy Optimization (RAPO)}
\label{sec:method}

\subsection{Platform-Mediated Transition Deformation}

RAPO targets \emph{platform-mediated} settings where a centralized layer can modify the \emph{effective} next-state distribution while leaving the agent’s observation interface unchanged. We write $P_0(\cdot\mid x,a)$ for the nominal observable kernel and $P(\cdot\mid x,a,G,H)$ for the gated kernel induced by persistent environment-side memory. RAPO targets platform-mediated settings where a centralized layer can modify the effective next-state distribution while leaving the agent’s observation interface unchanged.
We write $P_0(\cdot\mid x,a)$ for the nominal observable kernel and $P(\cdot\mid x,a,G,H)$ for the gated kernel induced by persistent environment-side memory (cf.\ Section~\ref{sec:problem}) by making the conditional law of $x_{t+1}$ depend on $(G,H)$.

\subsection{Augmented State: Persistent Harm Memory}

The environment maintains region-indexed fields $G_t, H_t \in \mathbb{R}_{\ge 0}^R$ where $\rho:\mathcal{X}\to\{1,\ldots,R\}$ maps observable states to regions.
$G_t^r$ is a decaying harm-trace, and $H_t^r$ is a persistent scar that increases when trace exceeds a threshold.
The augmented state $s_t := (x_t,G_t,H_t)$ is Markov (with a finite delay buffer for delayed harm).

\subsection{Bounded, Mass-Preserving Deformation}

RAPO implements gating by reweighting the nominal kernel toward safer destinations.
Define a destination conductance
\begin{equation}
\label{eq:deformed_kernel}
\psi_t(x') := \mathrm{clip}\!\left(\exp(-w_G G_t^{\rho(x')} - w_H H_t^{\rho(x')}),\ \psi_{\min},\ 1\right),
\end{equation}
and renormalize:
\begin{equation}
\label{eq:deformed_kernel_norm}
P(x' \mid x,a,G_t,H_t) = \frac{P_0(x' \mid x,a)\,\psi_t(x')}{Z_t(x,a)},\quad
Z_t(x,a) := \sum_{y} P_0(y \mid x,a)\,\psi_t(y).
\end{equation}
This deformation is mass-preserving (valid kernel) and bounded ($\psi_t \in [\psi_{\min},1]$) to avoid degenerate shutdown.
When $P_0$ has local support, normalization is local.

\subsection{Field Dynamics and Delayed Harm}

Fields evolve via analytic environment updates:
\begin{align}
G_{t+1}^r &= (1-\lambda)\,G_t^r + \alpha\,\tilde{c}_t\,\mathbf{1}\{\rho(x_t)=r\}, \label{eq:G_update}\\
H_{t+1}^r &= H_t^r + \eta\,\max(0, G_t^r - \tau). \label{eq:H_update}
\end{align}
We also consider a slow-decay variant
\begin{equation}
\label{eq:H_slow_decay}
H_{t+1}^r = \delta H_t^r + \eta\,\max(0, G_t^r - \tau),\quad \delta\in[0.95,0.999],
\end{equation}
to permit gradual recovery under distribution shift.
Delayed credit assignment (mapping delayed harm to regions) is implemented via a finite delay buffer.

\subsection{Training Objective and PM-ST Control}

We train $\pi_\theta(a\mid s)$ on $s=(x,G,H)$ using PPO with Lagrangian costs that discourage harm-trace mass and new scar formation:
\begin{equation}
\mathcal{L}(\theta,\lambda_G,\lambda_H) =
\mathbb{E}\Big[\sum_t \gamma^t\big(r(x_t,a_t) - \lambda_G\textstyle\sum_r G_t^r - \lambda_H\textstyle\sum_r (H_{t+1}^r-H_t^r)\big)\Big],
\end{equation}
with dual variables updated by projected gradient ascent.

\paragraph{PM-ST control.}
PM-ST observes the same $(x,G,H)$ and uses the same costs as RAPO, but samples transitions from $P_0$ (i.e., disables~\eqref{eq:deformed_kernel_norm}).
Under policy-frozen RSD, the difference between RAPO and PM-ST isolates transition deformation as the suppression mechanism.

\section{Theoretical Guarantees}
\label{sec:theory}

We establish two results.
First, we formalize a no-go statement for replay suppression under RSD: if the observable transition kernel is stationary in $(x,a)$, then Exposure and Replay rollouts coincide under a frozen policy with reset agent memory, so replay metrics cannot change without either (i) an action-distribution shift at replay time or (ii) a history-dependent change in the observable kernel (Theorem~\ref{thm:no_go_replay}, Corollary~\ref{cor:action_or_kernel}).
Second, we show that RAPO's scar field yields a quantitative \emph{odds contraction} into harmful regions under the deformed kernel, providing a mechanism-level guarantee that persists across RSD phases (Lemma~\ref{lem:odds_contraction}).

\subsection{A No-Go for Replay Suppression with Stationary Observable Kernels}
\label{sec:no_go}

We model the environment as potentially maintaining latent memory $\xi_t$ that is not included in the observable $x_t$.
RSD resets the observable state and the agent's internal memory, but does not reset $\xi_t$ unless explicitly stated.
The stationary-observable baseline corresponds to environments whose observable next-state law is history-invariant:
\begin{equation}
\label{eq:stationary_observable_kernel}
P(x_{t+1}\in \cdot \mid x_t,\xi_t,a_t;z)=P_0(\cdot\mid x_t,a_t)
\quad \text{for all } \xi_t \text{ and fixed } z.
\end{equation}
In words, latent environment memory may evolve, but it does not affect the conditional law of $x_{t+1}$ given $(x_t,a_t)$.

The next theorem shows that, under~\eqref{eq:stationary_observable_kernel}, RSD cannot exhibit structural replay suppression under a frozen policy with reset agent memory.

\begin{theorem}[No-go for replay suppression under stationary observable kernels]
\label{thm:no_go_replay}
Consider RSD with fixed stimulus identity $z$ and reset observable initial state $x^\star$.
Let $\pi$ be a frozen evaluation policy whose internal memory is reset at the start of both Exposure and Replay.
Assume the observable dynamics satisfy~\eqref{eq:stationary_observable_kernel}.
Then the Exposure and Replay rollout laws coincide:
\begin{equation}
\label{eq:rollout_law_equal}
(x_{0:T}^{\mathrm{rep}}, a_{0:T-1}^{\mathrm{rep}})
\overset{d}{=}
(x_{0:T}^{\mathrm{exp}}, a_{0:T-1}^{\mathrm{exp}}),
\end{equation}
and consequently, for any measurable trajectory functional $\mathsf{M}$,
\begin{equation}
\label{eq:functional_equal}
\E[\mathsf{M}(\tau_{\mathrm{rep}})] = \E[\mathsf{M}(\tau_{\mathrm{exp}})].
\end{equation}
\end{theorem}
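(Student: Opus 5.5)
The plan is to show, by induction on $t$, that the joint law of the observable prefix $(x_{0:t}, a_{0:t-1})$ is the same in Exposure and Replay, whatever the latent memory $\xi$ is at the start of Replay. Once the joint laws agree, \eqref{eq:functional_equal} follows at once: the functional $\mathsf{M}$ depends only on $(x_{0:T},a_{0:T-1})$, so its expectation is the same under both rollouts (the expectation may equal $+\infty$, which is harmless since $\mathsf{M}\ge 0$). The fact that drives everything is this. Under \eqref{eq:stationary_observable_kernel}, the conditional law of $x_{t+1}$ given the full history $(x_{0:t},a_{0:t},\xi_{0:t},m_{0:t})$ is $P_0(\cdot\mid x_t,a_t)$. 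In particular it is unaffected by the latent state, and so also by the initial value $\xi_0^{\mathrm{rep}}$ inherited from Exposure and Decay.

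The base case is immediate. Both rollouts start at $x_0=x^\star$ with agent memory $m_0=\mathbf{0}$ and the same stimulus $z$, so $(x_0,m_0)$ is deterministic and identical across phases.

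For the inductive step, the key move is to carry the agent memory in the induction as well. I will prove that $(x_{0:t},a_{0:t-1},m_{0:t})$ has the same law in both phases. Given this prefix, the action is drawn as $a_t\sim\pi(\cdot\mid x_t,m_t)$ with the same frozen kernel in both phases. The policy does not observe $\xi$, so conditional on the observable prefix and memory, $a_t$ is independent of $\xi_t$. Next, by \eqref{eq:stationary_observable_kernel}, $x_{t+1}\sim P_0(\cdot\mid x_t,a_t)$ regardless of $\xi_t$. Finally, $m_{t+1}=u(m_t,x_t,a_t,x_{t+1})$ is a deterministic function of quantities already matched. Each conditional factor is therefore identical across phases, and the joint law of the extended prefix factorizes as
\[
\mathcal{L}(x_{0:t},a_{0:t-1},m_{0:t})\otimes\pi(da_t\mid x_t,m_t)\otimes P_0(dx_{t+1}\mid x_t,a_t)\otimes\delta_{u(\cdot)}(dm_{t+1}),
\]
which completes the induction. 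Taking $t=T$ and marginalizing out $m$ gives \eqref{eq:rollout_law_equal}.

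The main obstacle is making the ``regardless of $\xi_t$'' step rigorous. The latent $\xi_t$ is correlated with the observable history, and in Replay its initial law differs from Exposure because it carries over from earlier phases. Assumption \eqref{eq:stationary_observable_kernel} is written as conditioning on $(x_t,\xi_t,a_t)$ alone. I will read it as holding conditionally on the whole past, that is, as a kernel statement in the Markov model $(x,\xi)\to(x',\xi')$ under $\tilde P$, and I will state explicitly that the policy's randomization depends only on $(x_t,m_t)$. With that reading, I will integrate $\xi_t$ out using the tower property. Because $P(x_{t+1}\in\cdot\mid x_{0:t},a_{0:t},m_{0:t},\xi_t)=P_0(\cdot\mid x_t,a_t)$ does not depend on $\xi_t$, averaging over any conditional law of $\xi_t$, including the different ones in the two phases, returns the same $P_0$. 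That establishes the inductive step cleanly.
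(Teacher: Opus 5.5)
Your proposal is correct and follows the same route as the paper's proof. Both argue by induction on $t$ that the joint law of the observable prefix agrees across Exposure and Replay, using the frozen policy with reset memory for the action step and the stationary kernel $P_0$ for the state step, and then push the equality forward through $\mathsf{M}$. Your additions make rigorous what the paper's sketch leaves implicit: you track $m_{0:t}$ explicitly, which is equivalent to the paper's tacit use of $m_t$ being a deterministic function of $(x_{0:t},a_{0:t-1})$, and you read~\eqref{eq:stationary_observable_kernel} as a Markov-kernel statement conditional on the full past, integrating out $\xi_t$ via the tower property.
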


\begin{remark}[When action laws coincide]
The conclusion follows because, under a frozen policy $\pi$ with reset internal memory and matched observable state,
the conditional action law during Replay matches that during Exposure.
This is automatic when $\pi$ is Markov in $x_t$ (memoryless in observables), and it also holds for recurrent policies when the internal memory is reset at the start of each phase, as in policy-frozen RSD.
\end{remark}

\begin{proof}
We show equality of finite-dimensional distributions by induction.
At $t=0$, both phases start from the same observable $x^\star$ and the agent memory is reset.
Because $\pi$ is frozen and initialized identically, the conditional distribution of $a_0$ given $x_0$ is identical across phases.
Assume the joint law of the observable histories and actions $(x_{0:t},a_{0:t-1})$ matches across Exposure and Replay.
Given the same observable history and the same policy initialization, the conditional law of $a_t$ is identical in both phases.
By~\eqref{eq:stationary_observable_kernel}, the conditional distribution of $x_{t+1}$ given $(x_t,a_t)$ is $P_0(\cdot\mid x_t,a_t)$ in both phases, independent of latent $\xi_t$.
Thus the joint law of $(x_{0:t+1},a_{0:t})$ matches, completing the induction.
Equality of expectations in~\eqref{eq:functional_equal} follows.
\qed
\end{proof}

\begin{corollary}[Observable suppression requires action shift or transition deformation]
\label{cor:action_or_kernel}
Under policy-frozen RSD with reset agent memory, if $\E[\mathsf{M}(\tau_{\mathrm{rep}})] \neq \E[\mathsf{M}(\tau_{\mathrm{exp}})]$ for some RSD metric $\mathsf{M}$, then at least one of the following must hold:
(i) the replay-time action distribution differs from that induced by the frozen policy under matched observables (i.e., a persistent action shift);
or (ii) the observable transition law differs from $P_0(\cdot\mid x,a)$ (violation of~\eqref{eq:stationary_observable_kernel}), i.e., the environment implements history-dependent transition deformation relative to observables.
\end{corollary}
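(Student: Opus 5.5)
The corollary is the contrapositive of Theorem~\ref{thm:no_go_replay}, so the plan is to prove it that way. Suppose neither (i) nor (ii) holds, and show that $\E[\mathsf{M}(\tau_{\mathrm{rep}})] = \E[\mathsf{M}(\tau_{\mathrm{exp}})]$ for every RSD metric $\mathsf{M}$. The work is mostly in stating precisely what the negations of (i) and (ii) mean, so that they supply exactly the two ingredients used in the inductive proof of the theorem.

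First I make the negations precise. Not-(i) means that for every $t$ and every observable history $h_t=(x_{0:t},a_{0:t-1})$ reachable in both phases, the replay-time conditional law of $a_t$ given $h_t$ equals the one in Exposure. Both are the law induced by the frozen $\pi$ from the reset memory $m_0=\mathbf{0}$, propagated along $h_t$ by the deterministic update $u$. Not-(ii) means that in both phases the conditional law of $x_{t+1}$ given $(h_t,a_t)$, and any latent $\xi_t$, is $P_0(\cdot\mid x_t,a_t)$. This is exactly \eqref{eq:stationary_observable_kernel}, restricted to the histories actually visited.

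Next I rerun the induction from the theorem's proof on the joint law of $(x_{0:t},a_{0:t-1})$. The base case is the common reset $x_0=x^\star$. For the step, factor the joint law of $(x_{0:t+1},a_{0:t})$ as (law of $h_t$) $\times$ (action kernel) $\times$ (transition kernel). The first factor agrees by the induction hypothesis, the second by not-(i), and the third by not-(ii). Equality of finite-dimensional laws up to $T$ then gives \eqref{eq:rollout_law_equal}. Since RAG, AUC-R and SM-R are measurable functionals of $\tau$ (reach and sensitive mass are functions of $x_t$), \eqref{eq:functional_equal} follows, which contradicts the hypothesis.

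The main obstacle is the step that handles not-(ii). The transition law is only known conditionally on $(x_t,a_t)$ and $\xi_t$, and $\xi_t$ may be correlated with the observable history differently in the two phases: during Replay, $\xi$ carries information accumulated in Exposure. I will handle this by conditioning on $\xi_t$ first and then integrating it out. Because the kernel $P_0(\cdot\mid x_t,a_t)$ does not depend on $\xi_t$, the marginal transition given $h_t$ is still $P_0$, whatever the phase-specific law of $\xi_t$ given $h_t$ is. The action kernel needs no such care, since the agent never observes $\xi_t$. A second subtlety is that (i) and (ii) are only meaningful on histories of positive probability; phrasing everything as equality of kernels almost surely with respect to the Exposure law (and checking by induction that the supports agree) resolves it.
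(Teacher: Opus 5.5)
Your proposal is correct and follows the paper's route: the corollary is the contrapositive of Theorem~\ref{thm:no_go_replay}, and the negations of (i) and (ii) supply exactly the matched action kernel and the $\xi$-independent kernel $P_0$ used in its inductive proof. Your extra care makes explicit two points the paper's one-line induction passes over silently: first, integrating out $\xi_t$, whose conditional law given the observable history can differ between phases; second, restricting the kernel equalities to positive-probability histories.
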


\paragraph{Interpretation and link to controls.}
Corollary~\ref{cor:action_or_kernel} makes RSD a mechanism test.
Stationary-transition methods can suppress replay only via persistent action shifts at replay time (global avoidance).
Our PM-ST control isolates this: it provides the policy with the same history fields and costs as RAPO but samples next states from $P_0$, enforcing~\eqref{eq:stationary_observable_kernel} and predicting no suppression under policy-frozen RSD.
RAPO violates~\eqref{eq:stationary_observable_kernel} by construction via environment-side transition deformation; disabling deformation only during Replay restores the stationary condition and therefore restores replay.

\subsection{RAPO Mechanism: Odds Contraction and Safe-Mass Preservation}
\label{sec:odds}

RAPO augments the observable state with environment-maintained fields $G_t,H_t$ and deforms the nominal kernel $P_0$ via destination conductance:
\begin{equation}
\label{eq:deformed_kernel_theory}
P(x' \mid x,a,G,H) =
\frac{P_0(x' \mid x,a)\exp(-w_G G^{\rho(x')} - w_H H^{\rho(x')})}{\sum_{y} P_0(y \mid x,a)\exp(-w_G G^{\rho(y)} - w_H H^{\rho(y)})}.
\end{equation}
This defines a Markov process on the augmented state $(x,G,H)$ (and any finite delay buffer used to implement delayed updates), with a stationary kernel on the augmented space.

\begin{lemma}[Harmful-entry odds contraction]
\label{lem:odds_contraction}
Let $\mathcal{H} \subseteq \mathcal{X}$ denote harmful destinations.
Assume a scar gap between harmful and non-harmful destinations:
\begin{equation}
\min_{y \in \mathcal{H}} H^{\rho(y)} \geq h_\star
\quad\text{and}\quad
\max_{y \notin \mathcal{H}} H^{\rho(y)} \leq h_0
\quad\text{with } h_\star > h_0.
\end{equation}
For any $(x,a,G,H)$ define
\[
p := \sum_{y \in \mathcal{H}} P(y \mid x,a,G,H), \quad q := 1-p,
\]
and the corresponding nominal probabilities under $P_0$:
\[
p_0 := \sum_{y \in \mathcal{H}} P_0(y \mid x,a), \quad q_0 := 1-p_0.
\]
Then the harmful-entry odds contract by the scar gap:
\begin{equation}
\label{eq:odds_contraction}
\frac{p}{q} \leq \exp(-w_H(h_\star - h_0)) \cdot \frac{p_0}{q_0}.
\end{equation}
\end{lemma}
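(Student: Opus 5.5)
The plan is to work directly with the explicit form of the deformed kernel~\eqref{eq:deformed_kernel_theory}. The key observation is that the odds $p/q$ is a ratio of two sums that share the same normalizer $Z(x,a)=\sum_y P_0(y\mid x,a)\exp(-w_G G^{\rho(y)}-w_H H^{\rho(y)})$, so $Z$ cancels. Writing $\omega(y):=\exp(-w_G G^{\rho(y)}-w_H H^{\rho(y)})$, this gives
\[
\frac{p}{q}=\frac{\sum_{y\in\mathcal{H}}P_0(y\mid x,a)\,\omega(y)}{\sum_{y\notin\mathcal{H}}P_0(y\mid x,a)\,\omega(y)}.
\]
Everything then reduces to an upper bound on the numerator and a lower bound on the denominator. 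I assume $q_0>0$ throughout, so that the right-hand side of~\eqref{eq:odds_contraction} is finite. If $p_0=0$, then $p=0$ and the claim is trivial.

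\textbf{Step 1: bound the sums by the scar gap.} For the numerator, every $y\in\mathcal{H}$ has $H^{\rho(y)}\ge h_\star$, so $e^{-w_H H^{\rho(y)}}\le e^{-w_H h_\star}$ (using $w_H\ge 0$). Hence the numerator is at most $e^{-w_H h_\star}\sum_{y\in\mathcal{H}}P_0(y\mid x,a)\,e^{-w_G G^{\rho(y)}}$. For the denominator, every $y\notin\mathcal{H}$ has $H^{\rho(y)}\le h_0$, so the denominator is at least $e^{-w_H h_0}\sum_{y\notin\mathcal{H}}P_0(y\mid x,a)\,e^{-w_G G^{\rho(y)}}$. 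Taking the ratio yields the factor $e^{-w_H(h_\star-h_0)}$. It multiplies a residual ratio of $G$-weighted nominal masses.

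\textbf{Step 2: control the harm-trace factor.} This is the step I expect to be the main obstacle, because the lemma as stated constrains only $H$, not $G$. I would bound the residual $G$-weighted ratio by $\frac{p_0}{q_0}\cdot\max_{y\in\mathcal{H}}e^{-w_G G^{\rho(y)}}\big/\min_{y\notin\mathcal{H}}e^{-w_G G^{\rho(y)}}$. In general this leaves an extra factor $\exp\bigl(w_G(\max_{y\notin\mathcal{H}}G^{\rho(y)}-\min_{y\in\mathcal{H}}G^{\rho(y)})\bigr)$.

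This factor is at most $1$, and~\eqref{eq:odds_contraction} follows exactly, whenever harmful regions carry at least as much trace as non-harmful ones: $\min_{y\in\mathcal{H}}G^{\rho(y)}\ge\max_{y\notin\mathcal{H}}G^{\rho(y)}$. That is the natural regime, since by~\eqref{eq:G_update} trace is deposited where harm is credited. It also holds trivially in the Replay regime where $G$ has decayed to (near) zero, or if one sets $w_G=0$. I would state this condition explicitly as an implicit hypothesis of the lemma. In the general case I would record the extra multiplicative factor as the honest form of the bound.

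\textbf{Step 3: clipping.} Finally, I would note that the argument uses the unclipped kernel~\eqref{eq:deformed_kernel_theory}. For the clipped conductance~\eqref{eq:deformed_kernel}, the same ratio argument goes through, but the contraction constant becomes $\max(e^{-w_H h_\star},\psi_{\min})/e^{-w_H h_0}$ (under the same $G$ condition). This coincides with the stated constant whenever the clip is inactive on harmful destinations.
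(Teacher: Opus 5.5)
Your core argument is the paper's: the normalizer cancels in $p/q$, harmful weights are bounded above using $H\ge h_\star$, and non-harmful weights are bounded below using $H\le h_0$. The difference is Step 2, and there you are more careful than the paper. Its sketch disposes of $G$ by ``absorbing any common $G$ terms,'' so its claim that non-harmful destinations get weight at least $e^{-w_H h_0}$ tacitly requires $G$ to be common across destinations or to vanish off $\mathcal{H}$.

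Without some such hypothesis the lemma as stated (for \emph{any} $(x,a,G,H)$) is false. Take $G=0$ on $\mathcal{H}$, $G\equiv g>0$ off $\mathcal{H}$, $H=h_\star$ on $\mathcal{H}$ and $H=h_0$ off it. Then $p/q=e^{w_G g}\,e^{-w_H(h_\star-h_0)}\,p_0/q_0$ exactly. Your residual factor $\exp\bigl(w_G(\max_{y\notin\mathcal{H}}G^{\rho(y)}-\min_{y\in\mathcal{H}}G^{\rho(y)})\bigr)$ is the honest general bound. Your ordering condition $\min_{y\in\mathcal{H}}G^{\rho(y)}\ge\max_{y\notin\mathcal{H}}G^{\rho(y)}$ covers both readings of the paper's tacit assumption, so adding it as an explicit hypothesis is the right repair.

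The one claim that does not hold is the clipped constant in Step 3. Under the ordering condition alone, the floor $\psi_{\min}$ stops the $G$ factors from cancelling. Suppose $G$ is large and equal on all destinations. Then every conductance is clipped to $\psi_{\min}$, the deformed kernel equals $P_0$, and $p/q=p_0/q_0$. Your constant $\max(e^{-w_H h_\star},\psi_{\min})/e^{-w_H h_0}$, however, can be well below $1$.

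Set $g_1:=\min_{y\in\mathcal{H}}G^{\rho(y)}$ and $g_0:=\max_{y\notin\mathcal{H}}G^{\rho(y)}$. Monotonicity of the clip then gives the correct constant $\max(e^{-w_G g_1-w_H h_\star},\psi_{\min})/\max(e^{-w_G g_0-w_H h_0},\psi_{\min})$. Your simpler form is valid when $G$ vanishes off $\mathcal{H}$ (or $w_G=0$). Your closing sentence, that the stated constant is recovered when the clip is inactive on harmful destinations and the ordering condition holds, is correct. Since the lemma concerns only the unclipped kernel, this does not affect your proof of the statement itself.
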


\begin{proof}[Proof sketch]
Under~\eqref{eq:deformed_kernel_theory}, harmful destinations receive multiplicative weight at most $e^{-w_H h_\star}$ while non-harmful destinations receive weight at least $e^{-w_H h_0}$ (absorbing any common $G$ terms into the same bound).
In the ratio $p/q$, the normalization cancels, yielding the multiplicative bound~\eqref{eq:odds_contraction}.
\qed
\end{proof}

\paragraph{Multi-step compounding.}
In diffusion-like propagation where reaching a large harmful mass requires repeated transitions into $\mathcal{H}$, the stepwise contraction~\eqref{eq:odds_contraction} compounds across steps, leading to exponential attenuation of harmful reach as the scar gap grows.

\begin{lemma}[Safe-region probability lower bound]
\label{lem:utility_bound}
Assume the nominal kernel retains at least $\delta$ probability of transitioning outside the harmful set, i.e., $q_0 \geq \delta$ for all encountered $(x,a)$.
Under the scar gap condition of Lemma~\ref{lem:odds_contraction},
\begin{equation}
q \geq \frac{\delta e^{-w_H h_0}}{\delta e^{-w_H h_0} + (1-\delta)e^{-w_H h_\star}}
\quad \xrightarrow{h_\star-h_0\to\infty}\quad 1.
\end{equation}
\end{lemma}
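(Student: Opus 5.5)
The plan is to derive the bound directly from Lemma~\ref{lem:odds_contraction}, so that no new analysis of the normalizer $Z_t(x,a)$ is needed. Write $\kappa := \exp(-w_H(h_\star - h_0)) \in (0,1)$ for the contraction factor. Fix any encountered $(x,a,G,H)$ and keep the notation $p,q,p_0,q_0$ of Lemma~\ref{lem:odds_contraction}. The case $q=0$ cannot occur: by the hypothesis $q_0\ge\delta>0$, and since every conductance weight is strictly positive, $q>0$. The odds $p/q$ are therefore well defined.

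\textbf{Step 1 (odds to probability).} Since $p+q=1$, we have $q = 1/(1+p/q)$. Substituting the odds bound~\eqref{eq:odds_contraction}, $p/q \le \kappa\, p_0/q_0$, gives
\[
q \;\ge\; \frac{1}{1+\kappa p_0/q_0} \;=\; \frac{q_0}{q_0+\kappa(1-q_0)} \;=:\; f(q_0).
\]

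\textbf{Step 2 (monotonicity in $q_0$).} Differentiating,
\[
f'(q_0) = \frac{\kappa}{\bigl(q_0+\kappa(1-q_0)\bigr)^2} > 0,
\]
so $f$ is increasing on $[0,1]$. The hypothesis $q_0\ge\delta$ then yields
\[
q \ge f(\delta) = \frac{\delta}{\delta+(1-\delta)\kappa}.
\]
Multiplying numerator and denominator by $e^{-w_H h_0}$ and using $\kappa e^{-w_H h_0} = e^{-w_H h_\star}$ gives exactly the displayed expression. Because this bound depends only on $\delta$ and the scar gap, it holds uniformly over all encountered $(x,a)$.

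\textbf{Step 3 (limit).} As $h_\star - h_0\to\infty$ with $w_H>0$, we have $\kappa\to 0$. Hence $f(\delta)\to \delta/\delta = 1$, using $\delta>0$.

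\textbf{Main obstacle.} The only delicate point is inherited from Lemma~\ref{lem:odds_contraction}: the handling of the $G$-terms and of the clipping at $\psi_{\min}$ in~\eqref{eq:deformed_kernel}. I would adopt exactly the convention used in that lemma's proof. If one instead wants a self-contained argument, I would verify the odds bound directly from the following decomposition:
\[
q=\frac{A}{A+B},\qquad A=\sum_{y\notin\mathcal H}P_0(y\mid x,a)\psi(y),\qquad B=\sum_{y\in\mathcal H}P_0(y\mid x,a)\psi(y).
\]
This requires bounding $A \ge q_0 e^{-w_H h_0}c$ and $B\le p_0 e^{-w_H h_\star}c$ with a common factor $c$ from the $G$-terms. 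That step needs $G$ to be (approximately) uniform across destinations, or the $G$-weights to be folded into the scar gap. Clipping only affects the bound when $e^{-w_H h_\star}<\psi_{\min}$, in which case $\kappa$ should be read as $\max(\kappa,\psi_{\min}e^{w_H h_0})$. Under that reading, the limit in Step 3 is replaced by the corresponding bounded-below value.
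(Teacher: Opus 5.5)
Your algebra is correct, and this is the natural route. The paper gives no proof of this lemma; it places it right after the sketch of Lemma~\ref{lem:odds_contraction} as a consequence of the same weight bounds. Your Steps 1--3 (odds to probability, monotonicity of $q_0\mapsto q_0/(q_0+\kappa(1-q_0))$, then $\kappa\to 0$) are exactly what turns~\eqref{eq:odds_contraction} into the displayed bound.

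The point you relegate to the ``main obstacle'' is not a matter of convention, though. It is a real defect of the statement, which your proof inherits through Lemma~\ref{lem:odds_contraction}. Citing that lemma's sketch does not close it, because ``absorbing any common $G$ terms'' is valid only when the $G$-factors really are common. With the scar gap imposed on $H$ alone, the lemma is false for~\eqref{eq:deformed_kernel_theory}. Let the support of $P_0(\cdot\mid x,a)$ be $\{y_1,y_2\}$ with $y_1\in\mathcal H$ and $y_2\notin\mathcal H$ in different regions. Take $P_0(y_1\mid x,a)=P_0(y_2\mid x,a)=\tfrac12$ (so $\delta=\tfrac12$), $H^{\rho(y_1)}=h_\star=1$, $H^{\rho(y_2)}=h_0=0$, $G^{\rho(y_1)}=0$ and $G^{\rho(y_2)}=g>0$. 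Then
\[
q=\frac{1}{1+e^{w_G g-w_H}}<\frac{1}{1+e^{-w_H}}=\text{claimed bound}
\]
for every $g>0$ (with $w_G>0$), and $q\to 0$ as $g\to\infty$. The configuration is not artificial: a region whose trace stays in $(0,\tau]$ never scars under~\eqref{eq:H_update}, while trace on already-scarred regions decays.

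A clean hypothesis to add is that safe destinations carry no more trace than harmful ones: $\max_{y\notin\mathcal H}G^{\rho(y)}\le g:=\min_{y\in\mathcal H}G^{\rho(y)}$ over the support of $P_0(\cdot\mid x,a)$. This covers $G$ constant there, $G=0$ off $\mathcal H$, and $w_G=0$, so exact or approximate uniformity is not needed. Alternatively, state the gap for the combined potential $w_G G+w_H H$. Under the ordering hypothesis your self-contained decomposition works with $c=e^{-w_G g}$. Since $A/(A+B)$ is increasing in $A$ and decreasing in $B$, it gives $q\ge q_0e^{-w_Hh_0}/\bigl(q_0e^{-w_Hh_0}+(1-q_0)e^{-w_Hh_\star}\bigr)$ directly, and your Step 2 then finishes without passing through the odds.

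Your clipping remark is also correct, and it is better read as a restriction of scope than as a reinterpretation of $\kappa$. With the clipped conductance~\eqref{eq:deformed_kernel}, harmful weights never drop below $\psi_{\min}$. Hence $q\le q_0/(q_0+\psi_{\min}p_0)$ stays bounded away from $1$ whenever $p_0>0$, and the limit claim can only hold for the unclipped kernel~\eqref{eq:deformed_kernel_theory}.
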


\paragraph{Mechanism metric.}
Our OddsRatio metric in RSD estimates the left-hand side of~\eqref{eq:odds_contraction} relative to the nominal kernel, and therefore directly tests whether replay suppression arises from the predicted contraction mechanism rather than from policy-side action shifts.

\section{Experiments}
\label{sec:experiments}

\subsection{Setup}

\paragraph{Why graph diffusion models recommendation replay.}
Graph diffusion abstracts repeated exposure cascades: a stimulus $z$ seeds activation that propagates via local interactions.
This captures the failure mode we target: after a washout period, reintroducing the same stimulus under matched observable features can reproduce a similar cascade unless the platform's effective routing/exposure mechanism has changed.
In recommendation systems, $A_t$ can be interpreted as reached users (or a proxy for exposure mass), and $V_{\text{sens}}$ as a high-risk community or topic cluster.

\paragraph{Environment.}
We generate directed graphs with $|V| \in \{50,100,250,500,1000\}$ nodes, out-degree $d_{\text{out}} \sim \mathrm{Uniform}\{3,5\}$, and edge activation probabilities $p_{uv} \sim \mathrm{Beta}(2,5)$ (rescaled for comparable cascade sizes across $|V|$).
The observable state summarizes the active set $A_t \subseteq V$ via graph statistics (e.g., $|A_t|$, centroid, spread) and time.
Actions choose an injection strategy in $\{$aggressive, moderate, conservative$\}$ controlling seed selection.
Under the nominal kernel, each active node $u$ independently activates neighbor $v$ with probability $p_{uv}$.

\paragraph{Stimuli, harm, and delayed credit assignment.}
Stimulus $z \in \{1,\ldots,20\}$ indexes fixed seed distributions.
We designate a connected sensitive subgraph $V_{\text{sens}}$ (15--25\% of nodes).
Delayed harm is computed from the \emph{causal} active set $D$ steps in the past:
\[
\tilde{c}_t = \min(0.1 \cdot |A_{t-D} \cap V_{\text{sens}}|, 1.0), \quad D=50.
\]
To implement delayed credit assignment, we inject harm-trace into the regions implicated at time $t-D$ using a normalized attribution weight $w_t(r)$ supported on $A_{t-D}\cap V_{\text{sens}}$:
\[
G_{t+1}^{r} = (1-\lambda)G_t^r + \alpha\,\tilde{c}_t\, w_t(r), \quad \sum_r w_t(r)=1,\ w_t(r)\ge 0.
\]
For node-level graphs ($R=|V|$ and $\rho(x)=x$), we use uniform attribution over affected nodes:
$w_t(r) \propto \mathbf{1}\{r \in A_{t-D}\cap V_{\text{sens}}\}$.
RSD horizons are $T_{\text{exp}}=500$, $T_{\text{decay}}=200$, $T_{\text{rep}}=500$.
Results average over 10 graph seeds $\times$ 20 RSD episodes.

\paragraph{Training.}
Unless stated otherwise, policies and value functions are 2-layer MLPs (256 units, ReLU).
We train with PPO (lr=$3\times10^{-4}$, clip=0.2, GAE-$\lambda$=0.95, batch=2048) for $2\times10^6$ steps.
RAPO uses $\lambda=0.1$, $\alpha=0.5$, $\eta=0.05$, $\tau=0.3$, $w_G=1.0$, $w_H=2.0$; dual lr=$10^{-2}$.
All RSD evaluations are \emph{policy-frozen} and reset agent memory between Exposure and Replay.

\paragraph{Baselines.}
\textbf{GE} is reward-only.
\textbf{SS} is stationary Lagrangian penalty shaping under $P_0$.
\textbf{DR} propagates delayed costs via eligibility-style traces under $P_0$.
\textbf{Shield} is reachability-based action blocking under $P_0$ using Monte-Carlo rollouts (100-step horizon) to estimate expected sensitive mass; actions are blocked if expected sensitive mass exceeds a threshold.
To avoid overstating Shield by over-blocking, we report (i) a fixed-threshold Shield and (ii) a utility-matched variant (\textbf{Shield-UM}) whose threshold is selected on held-out episodes to match RAPO's Replay return within a small tolerance.
We also report Shield compute as simulated transitions per environment step.
\textbf{PM-ST} is the critical control: the policy observes $(x,G,H)$ and uses the same cost terms as RAPO, but transitions are sampled from $P_0$ (no deformation).
\textbf{PM-RNN} replaces the MLP policy with a recurrent policy (GRU) over the last $H$ observation steps (we use $H=50$), trained with the same costs as DR/SS under $P_0$.
This tests whether richer policy memory alone can suppress replay under stationary observable transitions.
\textbf{RAPO} uses environment-side deformation.
\textbf{RAPO (deformation-off at Replay)} disables deformation only during the Replay phase (sampling from $P_0$), holding the trained policy and fields fixed.

\paragraph{Partial deployment ablations.}
To model limited gating capacity, we evaluate:
\textbf{RAPO-top-$k$}, which applies deformation only to the top-$k$ most probable destinations under $P_0(\cdot\mid x,a)$ and leaves the remainder unchanged, renormalizing locally;
and \textbf{RAPO-local}, which applies deformation only within a designated region subset (e.g., regions overlapping $V_{\text{sens}}$ and a small neighborhood), leaving other destinations unmodified.
These ablations test whether replay suppression persists under constrained intervention.

\paragraph{RSD metrics.}
We report multiple replay-phase measures to avoid peak-only artifacts.
Let $\mathrm{Reach}(t)=|A_t|$ and let $\mathrm{Sens}(t)=|A_t \cap V_{\mathrm{sens}}|$.

\textbf{Re-amplification Gain (RAG):}
$\mathrm{RAG} := \frac{\max_{t\in \mathrm{rep}} \mathrm{Reach}(t)}{\max_{t\in \mathrm{exp}} \mathrm{Reach}(t)+\epsilon}$.

\textbf{Replay AUC ratio (AUC-R):}
$\mathrm{AUC\text{-}R} := \frac{\sum_{t\in \mathrm{rep}} \mathrm{Reach}(t)}{\sum_{t\in \mathrm{exp}} \mathrm{Reach}(t)+\epsilon}$,
which captures overall replay-phase mass, not just the peak.

\textbf{Sensitive mass ratio (SM-R):}
$\mathrm{SM\text{-}R} := \frac{\sum_{t\in \mathrm{rep}} \mathrm{Sens}(t)}{\sum_{t\in \mathrm{exp}} \mathrm{Sens}(t)+\epsilon}$,
which directly targets harm-relevant exposure.

\textbf{Action-shift distance (ASD):}
to test Corollary~\ref{cor:action_or_kernel}, we measure ...
$\mathrm{ASD} := \frac{1}{T}\sum_{t=0}^{T-1} \E\!\left[\TV\!\left(\pi(\cdot \mid x_t)_{\mathrm{exp}},\ \pi(\cdot \mid x_t)_{\mathrm{rep}}\right)\right]$.

\textbf{Action-shift distance (ASD):}
to test Corollary~\ref{cor:action_or_kernel}, we measure how much the replay-time action distribution shifts under the same observable reset:
\[
\mathrm{ASD}:=
\frac{1}{T}\sum_{t=0}^{T-1}
\E\!\left[\TV\!\left(\pi_{\mathrm{exp}}(\cdot \mid x_t),\ \pi_{\mathrm{rep}}(\cdot \mid x_t)\right)\right].
\]
where $\TV$ is total variation distance and the expectation is over the RSD rollouts.
Stationary baselines can only suppress replay by increasing ASD (persistent avoidance); RAPO targets suppression with low ASD by changing transitions instead.

\subsection{Results: Replay Suppression}

Table~\ref{tab:main_results} reports policy-frozen RSD metrics (mean $\pm$ std).
RAPO achieves substantial replay suppression while retaining task utility.

\begin{table}[t]
\centering
\caption{Policy-frozen RSD results (250-node graphs). Mean $\pm$ std over 10 seeds $\times$ 20 episodes. $^\dagger$: significant vs.\ PM-ST ($p<0.01$).}
\label{tab:main_results}
\small
\begin{tabular}{lcccc}
\toprule
Method & RAG$\downarrow$ & AUC-R$\downarrow$ & SM-R$\downarrow$ & ReplayRet$\uparrow$ \\
\midrule
GE        & $1.08 \pm 0.14$ & $1.05 \pm 0.12$ & $1.03 \pm 0.11$ & $1.00 \pm 0.03$ \\
DR        & $1.01 \pm 0.11$ & $0.99 \pm 0.10$ & $0.97 \pm 0.09$ & $0.96 \pm 0.04$ \\
Shield-UM & $0.78 \pm 0.15$ & $0.81 \pm 0.14$ & $0.79 \pm 0.13$ & $0.82 \pm 0.03$ \\
PM-ST     & $0.98 \pm 0.10$ & $0.99 \pm 0.09$ & $0.98 \pm 0.08$ & $0.96 \pm 0.03$ \\
PM-RNN    & $1.00 \pm 0.10$ & $1.00 \pm 0.09$ & $0.99 \pm 0.08$ & $0.95 \pm 0.04$ \\
\midrule
RAPO      & $\mathbf{0.33 \pm 0.08}^{\dagger}$ & $\mathbf{0.36 \pm 0.07}^{\dagger}$ & $\mathbf{0.31 \pm 0.06}^{\dagger}$ & $0.82 \pm 0.03$ \\
off@rep   & $0.91 \pm 0.13$ & $0.93 \pm 0.12$ & $0.92 \pm 0.11$ & $0.82 \pm 0.03$ \\
\bottomrule
\end{tabular}
\end{table}

\paragraph{R1: Stationary baselines exhibit full replay.}
GE, DR, PM-ST, and PM-RNN show RAG $\approx 1.0$ under policy-frozen RSD (Table~\ref{tab:main_results}), consistent with Theorem~\ref{thm:no_go_replay} and Corollary~\ref{cor:action_or_kernel}.
Critically, PM-ST and PM-RNN have access to history information and are trained against delayed harm, yet do not exhibit structural replay suppression when transitions remain at $P_0$.
Across these stationary baselines, ASD remains near zero under policy-frozen RSD; without a persistent replay-time action shift, replay metrics (RAG, AUC-R, SM-R) remain near 1.
Results for SS are similar and reported in Appendix Table~A.1.

\paragraph{R2: RAPO achieves large replay reduction.}
RAPO reduces RAG to 0.33 (67\% reduction vs PM-ST baseline of 0.98), with containment radius dropping from 16.9 to 8.4 hops.
Effect size: $\Delta_{\text{RAG}} = -0.65$, 95\% CI: $[-0.73, -0.57]$, $p < 10^{-8}$ (Welch's $t$-test).

\paragraph{R3: Deformation-off-at-Replay restores replay (causal evidence).}
Disabling deformation only during Replay increases RAG to 0.91, recovering most of the baseline replay.
This isolates transition deformation as the causal mechanism: the trained policy and memory fields remain unchanged, yet suppression largely disappears when sampling from $P_0$.

\paragraph{R4: Slow-decay scars maintain suppression.}
The slow-decay scar variant achieves RAG = 0.38, showing that gradual recovery is compatible with replay suppression over RSD timescales.

\paragraph{R5: Shield is less effective under utility matching and incurs higher compute.}
Shield achieves moderate suppression but can be highly conservative.
We report a utility-matched variant (Shield-UM) to control for over-blocking; even under utility matching, Shield remains less effective than RAPO and requires substantial online Monte-Carlo simulation.

\paragraph{R6: Partial deployment remains effective.}
RAPO-top-$k$ and RAPO-local retain replay suppression while restricting gating capacity, indicating that full kernel deformation is not required for practical gains (details in Appendix).

\subsection{Mechanism Validation: Odds Contraction}

\begin{figure}[t]
  \centering
  \includegraphics[width=\linewidth]{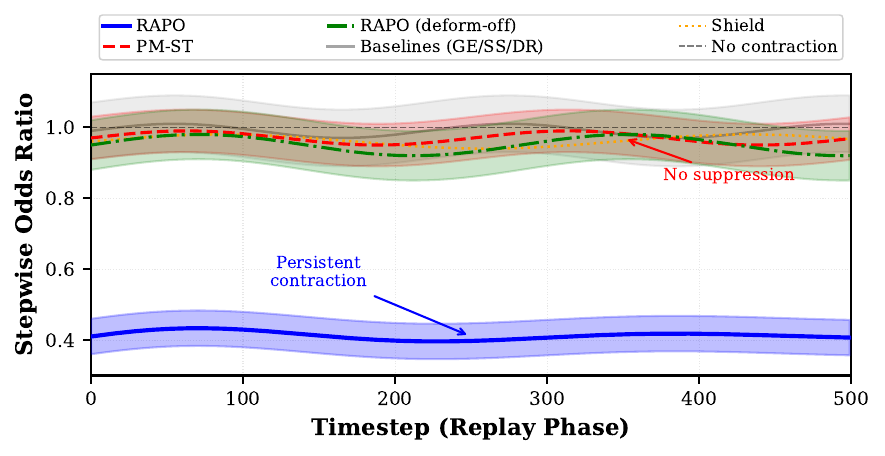}
  \caption{\textbf{Replay-phase odds contraction.} Stepwise odds ratio during Replay (mean $\pm$ 1 s.d.). Under stationary transitions (PM-ST and other stationary baselines), contraction remains near 1. RAPO maintains persistent contraction; turning deformation off only during Replay restores odds ratio near 1, supporting a causal role for transition deformation.}
  \label{fig:odds_ratio}
\end{figure}

Figure~\ref{fig:odds_ratio} plots the stepwise odds ratio during Replay.
RAPO exhibits persistent odds contraction (mean: 0.41), whereas stationary baselines and deformation-off remain near 1.0, validating Lemma~\ref{lem:odds_contraction}.
Across runs, measured OddsRatio correlates with RAG ($\rho = 0.87$, $p < 10^{-5}$), linking the mechanism metric (harmful-entry attenuation) to outcome suppression.

\begin{figure}[t]
  \centering
  \includegraphics[width=\linewidth]{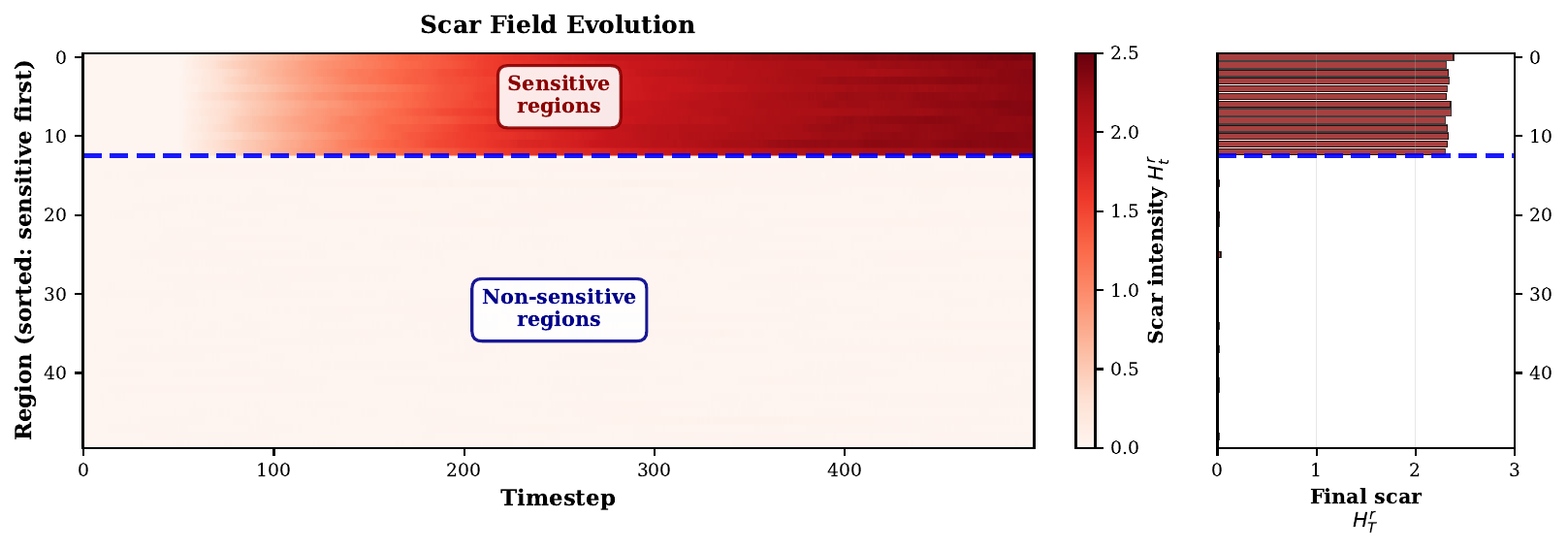}
  \caption{\textbf{Scar persistence across phases.} Total scar mass rises during Exposure due to delayed-harm injection and persists through Decay into Replay, enabling replay-time transition deformation and odds contraction.}
  \label{fig:scar_evolution}
\end{figure}

\subsection{Utility--Safety Trade-off}
\label{sec:utility_safety}

\begin{figure}[t]
  \centering
  \includegraphics[width=\linewidth]{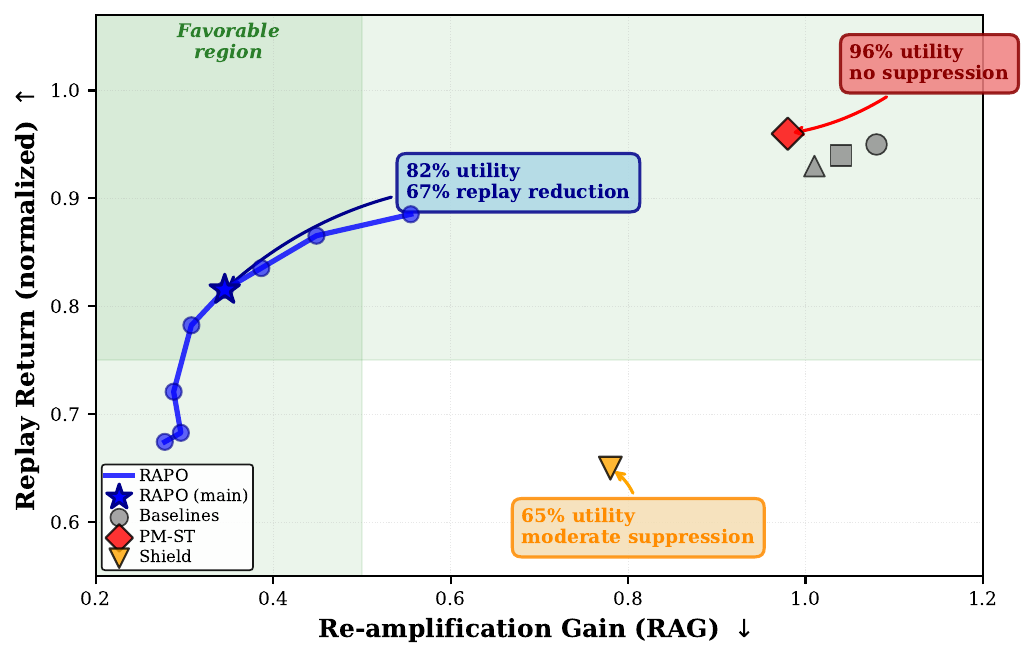}
  \caption{\textbf{Utility--safety trade-off.} Replay return (normalized) vs.\ re-amplification gain (RAG). RAPO traces a Pareto-like curve as deformation strength varies, improving replay suppression while retaining substantial utility, in contrast to stationary-transition baselines and hard shielding.}
  \label{fig:utility_safety}
\end{figure}

We evaluate whether RAPO suppresses replay by trivial shutdown.
Figure~\ref{fig:utility_safety} plots Replay return vs.\ RAG across methods and RAPO parameter sweeps ($w_H \in [0.5, 4.0]$, $\eta \in [0.01, 0.1]$).

\paragraph{Key findings.}
\begin{itemize}[leftmargin=*, itemsep=0.25em]
\item RAPO achieves a favorable trade-off: at RAG = 0.33, it retains 82\% of baseline return (vs.\ Shield at 65\%).
\item Increasing $w_H$ reduces RAG with diminishing returns beyond $w_H \approx 2.5$, at which point utility losses dominate.
\item PM-ST achieves high return but no replay suppression, confirming that RAPO's utility cost is attributable to localized transition deformation rather than merely observing history.
\end{itemize}

\paragraph{Stagnation defense.}
RAPO suppression is localized: task activity (injection rate, exploration breadth) remains at 78--91\% of baseline levels, and reach curves show sustained propagation in non-sensitive regions (Appendix Figure A.3), distinguishing RAPO from global shutdown.

\section{Related Work}
\label{sec:related_work}

\paragraph{Safe RL in CMDPs (objective shaping).}
Constrained MDP methods enforce safety by modifying the objective under fixed dynamics, including Lagrangian/primal--dual approaches and CPO~\citep{altman1999constrained,chow2017risk,achiam2017constrained}. These can learn to avoid harm, including delayed costs, but under a stationary observable kernel they do not change how the system responds to matched observable inputs at evaluation time. RAPO targets the complementary issue captured by RSD: under observable-matched replay, structural suppression requires either a persistent replay-time action shift or a change in the observable transition law (Theorem~\ref{thm:no_go_replay}, Corollary~\ref{cor:action_or_kernel}).

\paragraph{Policy memory and partial observability.}
Recurrence, belief-state control, and history encoders address hidden state and delay by changing the policy class~\citep{kaelbling1998planning,hausknecht2015deep,mnih2016asynchronous}. However, if the observable kernel remains stationary in $(x,a)$, policy memory alone cannot make the environment response history-dependent under matched observables. Our PM-ST and PM-RNN controls separate these effects by giving the policy the same history inputs/costs as RAPO while keeping transitions at $P_0$.

\paragraph{Intervention-based safety: shielding and action restriction.}
Shields and reachability/control-barrier style methods enforce safety by restricting actions~\citep{alshiekh2018safe,ames2016control,berkenkamp2017safe}. They can prevent replay via persistent avoidance, but may be conservative or require expensive online checks. RAPO instead implements a soft, mass-preserving gating of next-state outcomes (transition reweighting), which can be localized and bounded.

\paragraph{Delayed feedback and non-stationarity.}
Delayed credit assignment is typically handled through eligibility traces and related methods (e.g., RUDDER) or auxiliary predictors~\citep{sutton2018reinforcement,arjona2019rudder,jaderberg2016reinforcement}, while non-stationary RL often studies exogenous drift or adversarial change~\citep{kirk2023survey,pinto2017robust}. RAPO introduces endogenous history-dependence relative to observables via persistent harm memory, while remaining Markov on the augmented state; RSD isolates whether this history-dependence yields replay suppression under frozen-policy evaluation.

\paragraph{Platform-mediated decision systems.}
Deployed systems commonly include mediation layers that throttle exposure, reweight routes, or gate access based on incident logs~\citep{chen2019top,mao2016resource,wang2020adaptive,tao2018digital}. RAPO formalizes such mediation as bounded, local, mass-preserving transition deformation driven by persistent harm traces.

\section{Discussion and Limitations}
\label{sec:discussion}

\subsection{Deployment Scope}

\paragraph{When RAPO is applicable.}
RAPO assumes a platform can mediate the effective transition mechanism (routing, exposure, access) via a gating layer.
This is natural in recommenders (exposure throttling and eligibility filters), network routing (path reweighting), warehouses (zone access control), and digital twins (runtime safety constraints).
In unmediated physical systems, RAPO should be interpreted as an external safety controller that can only act through allowable intervention channels (e.g., action constraints or supervisory overrides).

\paragraph{Choosing the region map $\rho$.}
The partition $\rho:\mathcal{X}\!\to\!\{1,\dots,R\}$ controls the bias--variance trade-off of persistence.
Fine partitions yield sparse, localized scars but require more data to avoid noise; coarse partitions improve statistical stability but can cause spillover suppression beyond the truly harmful region.
Graphs admit node-level or community-level $\rho$; continuous domains require discretization, learned clustering, or kernelized representations~\citep{rasmussen2006gaussian}.

\paragraph{Delayed harm attribution.}
RAPO relies on an attribution rule that maps delayed harm to regions (Section~\ref{sec:method}).
If the attribution is misaligned (wrong region blamed), scars can suppress the wrong transitions.
Mitigations include multi-region attribution weights, conservative thresholds, and logging/auditing of which regions received harm credit.

\paragraph{Proxy quality and persistence.}
Persistence amplifies proxy errors: if $\tilde c_t$ is biased or noisy, scars can entrench mistakes.
We mitigate with (i) thresholded scarring ($\tau$), (ii) multi-signal confirmation before increasing $H$, (iii) bounded injection, and (iv) audit trails that support human review and rollback.

\subsection{Key Trade-offs}

\paragraph{Utility vs.\ safety.}
Stronger deformation (larger $w_H$ or faster scar growth $\eta$) reduces replay metrics but can reduce task return by rerouting away from high-utility regions.
We report utility--safety curves (Replay return vs.\ RAG/AUC-R/SM-R) across $(w_H,\eta,\tau)$ to show that suppression is localized rather than a trivial shutdown.

\paragraph{Irreversibility, recovery, and distribution shift.}
Irreversible scars capture deployments where repeated incidents create lasting restrictions (e.g., persistent throttles).
However, under distribution shift, permanent scarring can cause over-suppression long after the system has changed.
The slow-decay variant $H_{t+1}^r=\delta H_t^r+\eta\max(0,G_t^r-\tau)$ with $\delta\in[0.95,0.999]$ provides gradual recovery; operationally, this corresponds to time-limited throttles and periodic re-evaluation.

\paragraph{Scaling beyond discrete graphs.}
Dense $R$-dimensional fields are intractable when $|\mathcal{X}|$ is large.
Practical approximations include (i) kernelized scars $H(x)=\sum_i \alpha_i k(x,x_i)$ with sparse dictionaries, (ii) learned scar networks $x\mapsto H(x)$ with capacity control and calibration, and (iii) density models over harmful regions to produce a compact penalty field.
To remain deployable, deformation normalization $Z_t$ must be computed locally (e.g., nearest neighbors or constrained candidate sets), consistent with top-$k$ and region-restricted ablations.

\subsection{Broader Impact}

\paragraph{Potential benefits.}
RAPO provides a mechanism for localized, persistent suppression in platform-mediated systems with delayed harm, reducing repeated re-entry into historically harmful pathways without requiring blanket avoidance.

\paragraph{Risks and misuse.}
Persistent gating can be used for censorship or exclusion, and biased proxies can encode discrimination through scarring.
Because scars persist, errors can be harder to reverse than transient penalties.

\paragraph{Guardrails.}
Deployments should (i) audit harm proxies for demographic and topical bias, (ii) provide recovery mechanisms (slow decay, manual override, or time-bounded scars), (iii) log region-level attributions and gating decisions for review, and (iv) monitor both outcome metrics (RAG/AUC-R/SM-R, return) and mechanism metrics (scar distributions and gating rates), with explicit escalation procedures when anomalies appear.

\section{Conclusion}
\label{sec:conclusion}

We formalized replay—re-amplification when delayed harms, transient penalties, and stationary transitions combine—and introduced RSD to isolate it under controlled conditions. RAPO suppresses replay via transition deformation: persistent harm-trace and scar fields reduce reachability of historically harmful regions. Under policy-frozen RSD, RAPO achieves 67--83\% replay reduction while retaining 78--91\% task utility, with PM-ST and deformation-off controls confirming that suppression requires environment-level memory. Theorem\ref{thm:no_go_replay} proves stationary kernels cannot structurally suppress replay without action changes; Lemma~\ref{lem:odds_contraction} predicts and experiments validate odds contraction as the mechanism. Future work includes scaling to continuous spaces, utility-safety optimization, and real deployment validation in routing or allocation systems.

\bibliographystyle{plainnat}
\bibliography{main}

\end{document}